\newtheorem{theorem}{Theorem}
\newtheorem{lemma}[theorem]{Lemma}
\begin{document}

\title{Enhancing Federated Learning Through Secure Cluster-Weighted Client Aggregation}

\author{Kanishka Ranaweera,~\IEEEmembership{Student Member, ~IEEE, }  Azadeh Ghari Neiat,~\IEEEmembership{Member, ~IEEE, }, Xiao Liu,~\IEEEmembership{Senior Member, ~IEEE, }, Bipasha Kashyap,~\IEEEmembership{Member, ~IEEE, },Pubudu N. Pathirana,~\IEEEmembership{Senior Member, ~IEEE, }

\IEEEcompsocitemizethanks{\IEEEcompsocthanksitem Kanishka Ranaweera is with School of Engineering and Built Environment, Deakin University, Waurn
Ponds, VIC 3216, Australia. \protect\\
E-mail: k.ranaweera@deakin.edu.au
\IEEEcompsocthanksitem Azadeh Ghari Neiat is with the School of Electrical Engineering and Computer Science, The University of Queensland, St Lucia, QLD 4072, Australia. \protect\\
E-mail: a.gharineiat@uq.edu.au;
\IEEEcompsocthanksitem  Xiao Liu is with the School of Information Technology,
Deakin University, Waurn Ponds, VIC 3216, Australia. \protect\\
E-mail: xiao.liu@deakin.edu.au;
\IEEEcompsocthanksitem Bipasha Kashyap is with the School of Engineering and Built Environment, Deakin University, Waurn Ponds, VIC 3216, Australia. \protect\\
E-mail: b.kashyap@deakin.edu.au
\IEEEcompsocthanksitem Pubudu N. Pathirana is with School of Engineering and Built Environment, Deakin University, Waurn Ponds, VIC 3216, Australia. \protect\\
E-mail: pubudu.pathirana@deakin.edu.au

}
}


\maketitle

\begin{abstract}
Federated learning (FL) has emerged as a promising paradigm in machine learning, enabling collaborative model training across decentralized devices without the need for raw data sharing. In FL, a global model is trained iteratively on local datasets residing on individual devices, each contributing to the model's improvement. However, the heterogeneous nature of these local datasets, stemming from diverse user behaviours, device capabilities, and data distributions, poses a significant challenge. The inherent heterogeneity in federated learning gives rise to various issues, including model performance discrepancies, convergence challenges, and potential privacy concerns. As the global model progresses through rounds of training, the disparities in local data quality and quantity can impede the overall effectiveness of federated learning systems. Moreover, maintaining fairness and privacy across diverse user groups becomes a paramount concern.
To address this issue, this paper introduces a novel FL framework, ClusterGuardFL,  that employs dissimilarity scores, k-means clustering, and reconciliation confidence scores to dynamically assign weights to client updates. The dissimilarity scores between global and local models guide the formation of clusters, with cluster size influencing the weight allocation. Within each cluster, a reconciliation confidence score is calculated for individual data points, and a softmax layer generates customized weights for clients. These weights are utilized in the aggregation process, enhancing the model's robustness and privacy. Experimental results demonstrate the efficacy of the proposed approach in achieving improved model performance in diverse datasets.
\end{abstract}

\begin{IEEEkeywords}
Federated learning, secure aggregation, data poisoning, fairness.
\end{IEEEkeywords}

\section{Introduction}
\IEEEPARstart{I}{n} the dynamic landscape of technology, the Internet of Things (IoT) has emerged as a powerful force, weaving together devices, sensors, and systems to enable seamless communication and data exchange. This interconnected ecosystem, however, presents unique challenges, particularly when it comes to optimizing machine learning models for diverse and decentralized data sources\cite{al2015internet,ma2019high,mohanta2020survey}.
However, the traditional approach to machine learning grapples with a significant hurdle—centralized model training. In conventional settings, data is often consolidated in a central repository for model refinement. This centralized paradigm, while effective in some scenarios, raises concerns related to privacy, latency, and bandwidth. Moreover, it encounters limitations when applied to the expansive and varied world of IoT, where data is generated and stored across a multitude of devices\cite{li2018learning}.

Federated learning (FL) has been propelled to the forefront as a revolutionary approach to address this challenge \cite{mcmahan2017communication}. In FL, a shared machine learning model is collaboratively trained across heterogeneous decentralised devices while keeping all the training data on the device locally. This optimizes the learning process across the varied and dispersed data inherent in IoT environments, while, at the same time, protecting privacy and security, safeguarding against the centralization of sensitive information and saving data centres from the significant burden of storing and processing such a huge amount
of data. 

Despite its advantages, the decentralized nature of FL presents unique challenges. Key among these is the issue of uneven data distribution across devices. Such imbalances can result in biased global model updates, thereby affecting the overall \textbf{fairness} of the system.  Additionally, FL's open structure makes it particularly vulnerable to \textbf{data poisoning} attacks, characterized by deliberate manipulation of training data to compromise the integrity of the machine learning model. In an IoT context, where devices often operate autonomously and interact with unpredictable environments, the risk of such attacks is magnified. Malicious actors can exploit the decentralized nature of FL to introduce corrupted data or inject adversarial updates into the model training process, aiming to degrade performance or cause erroneous outcomes. To fully leverage the potential of FL in real-world applications, it is essential to implement comprehensive detection and countermeasures. Such strategies are vital in mitigating the risks and ensuring FL systems operate effectively and fairly.

\subsection{Related Work}

Previous research has dedicated considerable effort to devising methodologies that effectively address fairness challenges in FL, particularly in scenarios involving non-Independently and Identically Distributed (non-IID) data among clients. Approaches such as weighted aggregation \cite{ezzeldin2023fairfed, reddi2020adaptive, wang2020tackling} and personalized FL \cite{fallah2020personalized, deng2020adaptive} have emerged as promising solutions, aiming to balance the significance of individual client contributions. These frameworks mitigate biases introduced during model updates, promoting equitable representation across diverse user groups. However, despite their success in addressing fairness, these methodologies often assume that all client participants are honest, which may not always be the case. This assumption can inadvertently leave FL systems vulnerable to data poisoning attacks, where malicious clients inject manipulated or poisoned data to degrade the global model's performance. 

Data poisoning attacks, such as label-flipping \cite{shafahi2018poison}, backdoor attacks \cite{bagdasaryan2020backdoor}, and more recent local model poisoning attacks \cite{baruch2019little, fang2020local, shejwalkar2021manipulating}, present significant challenges to FL systems. These attacks manipulate training data or model updates in diverse ways, compromising the integrity of the learning process. While existing Byzantine-robust defenses are effective at detecting straightforward manipulations \cite{blanchard2017machine, cao2020fltrust, fung2018mitigating, yin2018byzantine}, the increasing sophistication of local model poisoning attacks highlights a critical gap in the current state of the art.

One of the most widely adopted aggregation methods in FL is \textbf{FedAvg} \cite{mcmahan2017communication}, which computes a weighted average of client updates. Although FedAvg is simple and effective, it is particularly susceptible to Byzantine attacks, where malicious clients can corrupt the global model by introducing false updates. To counteract such vulnerabilities, several robust aggregation frameworks have been proposed.

\textbf{Median} and \textbf{Trimmed Mean}~\cite{yin2018byzantine} are two such approaches that enhance robustness against outliers. The Median method computes the median of client updates, thus providing protection against extreme values but at the risk of discarding useful information from benign clients. Similarly, the Trimmed Mean approach calculates the mean after trimming a certain percentage of extreme values, striking a balance between robustness and information retention.

More advanced methods, such as \textbf{Krum} \cite{blanchard2017machine}, select the client update closest to the majority of other updates, effectively filtering out anomalous updates but at a higher computational cost. \textbf{GeoMed} \cite{chen2017distributed}, or Geometric Median aggregation, finds the point that minimizes the sum of distances to all client updates, offering robustness but with slower convergence. \textbf{AutoGM} \cite{li2021byzantine} introduces automation in geometric median-based aggregation, dynamically adjusting to improve robustness against malicious updates. Additionally, \textbf{Clustering}-based \cite{sattler2020byzantine} approaches group client updates into clusters before aggregation, thereby reducing the influence of outliers and malicious clients.

While these methods have contributed significantly to improving the robustness and security of FL systems, they often struggle to achieve an optimal balance between accuracy, fairness, and security, especially in the presence of adversarial attacks.  Table 1 summarizes the trade-offs across these prominent aggregation frameworks, highlighting their performance with respect to accuracy, fairness, and security, underscoring the need for continuous innovation to achieve higher accuracy, fairness, and security in FL systems.

\begin{table}[t!]
\centering
\caption{Comparison of Related Work}
\label{tab:related_work}
\begin{tabular}{|l|c|c|c|}
\hline
\textbf{Method} & \textbf{Accuracy} & \textbf{Fairness} & \textbf{Security} \\ \hline
FedAvg \cite{mcmahan2017communication} & Moderate & Low & Low \\ \hline
Median \cite{yin2018byzantine} & Low-Moderate & Moderate & Moderate \\ \hline
Trimmed Mean \cite{yin2018byzantine} & Moderate & Moderate & Moderate \\ \hline
Krum \cite{blanchard2017machine} & Moderate-High & Low-Moderate & High \\ \hline
GeoMed \cite{chen2017distributed}& Moderate & Low-Moderate & High \\ \hline
AutoGM \cite{li2021byzantine} & High & Low-Moderate & High \\ \hline
Clustering \cite{sattler2020byzantine}& Moderate & Moderate & Moderate \\ \hline
\textbf{ClusterGuardFL} & \textbf{High} & \textbf{High} & \textbf{High} \\ \hline
\end{tabular}
\end{table}

\subsection{Motivation}

This work is motivated by the dual challenges of advancing fairness and fortifying FL systems against data poisoning attacks. Although previous research has advanced fairness through methods like weighted aggregation and personalized FL, there persists a significant vulnerability in FL systems concerning their robustness against adversarial manipulations. The common assumption that all client participants are honest, coupled with a primary focus on fairness, might overlook the risks posed by compromised clients introducing manipulated data. To address this, we introduce ClusterGuardFL, a novel framework that dynamically clusters clients based on dissimilarity scores. This method not only enhances fairness but also robustly shields FL systems from the threats of evolving data poisoning attacks. By concurrently addressing these key challenges, ClusterGuardFL aims to contribute to the development of more secure and equitable FL systems, suitable for a range of real-world applications.

Our paper provides the following key contributions:

\begin{enumerate}
    \item We present ClusterGuardFL, an innovative FL framework tailored to address challenges in fairness and security against data poisoning attacks by leveraging clustering for client aggregation.
    \item We provide rigorous analytical proofs to establish the theoretical foundation of ClusterGuardFL, demonstrating its efficacy and reliability in FL scenarios
    \item We conduct a holistic convergence analysis, offering insights into the stability and efficiency of the ClusterGuardFL methodology throughout the iterative model training process.
    \item Through extensive experimentation with diverse real-world datasets, we empirically validate the effectiveness of ClusterGuardFL, showcasing its impact on improving model performance, preserving privacy, and achieving convergence in real-world scenarios.
\end{enumerate}

The paper is organized into four sections to provide a clear and comprehensive exploration of ClusterGuardFL’s theoretical framework, practical implementation, empirical validation, and its implications as follows.  Section II explores foundational concepts related to our ClusterGuardFL framework. Section III introduces ClusterGuardFL and details its methodologies and theoretical foundations. Section IV presents results from three distinctive dataset experiments and showcases the effectiveness of the ClusterGuardFL framework. Finally, Section V synthesizes key findings and discusses contributions, and suggests future research directions. 

\section{Preliminaries}

FL marks a transformative shift in machine learning paradigms since its inception in 2016 \cite{mcmahan2017communication}. This innovative paradigm addresses the challenges of privacy and data decentralization by enabling collaborative model training across a network of distributed devices, such as mobile phones, IoT devices, and edge servers. Unlike traditional machine learning models that require centralized data aggregation, FL decentralizes the model training process, allowing devices to train models locally without sharing raw data. This method significantly enhances data privacy and reduces the risks associated with data transfer.

The FL workflow unfolds over multiple iterations or rounds, with each participating device computing a model update independently using its local data. These updates are then aggregated typically by a central server to develop a global model that reflects insights from the entire network. The inherently decentralized nature of FL not only preserves user privacy by retaining sensitive data on local devices but also scales effectively to accommodate large, distributed environments.

One notable feature of FL is its adaptability to diverse applications. For example, in healthcare, FL enables medical institutions to collaboratively develop predictive models while complying with strict privacy regulations, without exchanging patient data. Similarly, in areas like personalized recommendations and edge computing, FL proves invaluable by allowing data to remain where it is generated, thus optimizing processing speeds and responsiveness.

In mathematical terms, we assume there are \( K \) active clients, such as mobile phones, computers, or institutions like medical centers and banks, each with its own local dataset drawn from a distribution \( D_k \). The goal is to collaboratively solve an empirical risk minimization problem across all clients, where the total sample size is \( n = \sum_{k=1}^K |D_k| \).

During each communication round, every client trains a local model on its own dataset and computes an updated model \( w_k \) by minimizing a local loss function \( F_k(w) \), which captures the error specific to its data:

\[ w^*_k = \text{argmin} \ F(w_k), \ k \in K \ \text{(1)} \]

where $F(w_k)$ is specified through a loss function dependent on an input-output data pair $(x_i, y_i)$. Typically, $x_i \in \mathbb{R}^d$ and $y_i \in \mathbb{R}$ or $y_i \in \{-1, 1\}$. Examples of such popular loss functions include: cross-entropy loss \cite{cox1958regression} and mean squared error\cite{lehmann2006theory}

Each device \(k\) uploads its computed update \(w_k\) to the MEC server, which then aggregates and calculates a new version of the global model as:

\[ w_G = \frac{1}{\sum_{k \in K} |D_k|} \sum_{k \in K} |D_k|w_k \ \text{(2)} \]

This global model is then downloaded to all devices for the next round of training until the global learning is complete. Notably, in classical FL, the global model \(w_G\) is computed and updated by the server. 

The server facilitates the joint iterative distributed training as follows:

\begin{enumerate}[topsep=0pt, itemsep=-1ex, partopsep=1ex, parsep=1ex]
    \item \textbf{Initialization:} The server initiates by randomly selecting or pretraining a global model using public data.
    \item \textbf{Broadcasting:} The server shares the global model's parameters with the clients and their local models are updated accordingly.
    \item \textbf{Local Training:} Each client independently updates its local model using its specific data and uploads the modified model parameters back to the server.
    \item \textbf{Aggregation:} The central server aggregates the received models by applying weights and transmits the updated models back to the individual nodes.
    \item \textbf{Termination:} All phases, except for the \textit{Initialization} step, is repeated until a predefined termination condition is satisfied (e.g., achieving a specific model accuracy or reaching a maximum number of iterations).
\end{enumerate}

\begin{figure*}[!t]
\centering
\includegraphics[width=\textwidth]{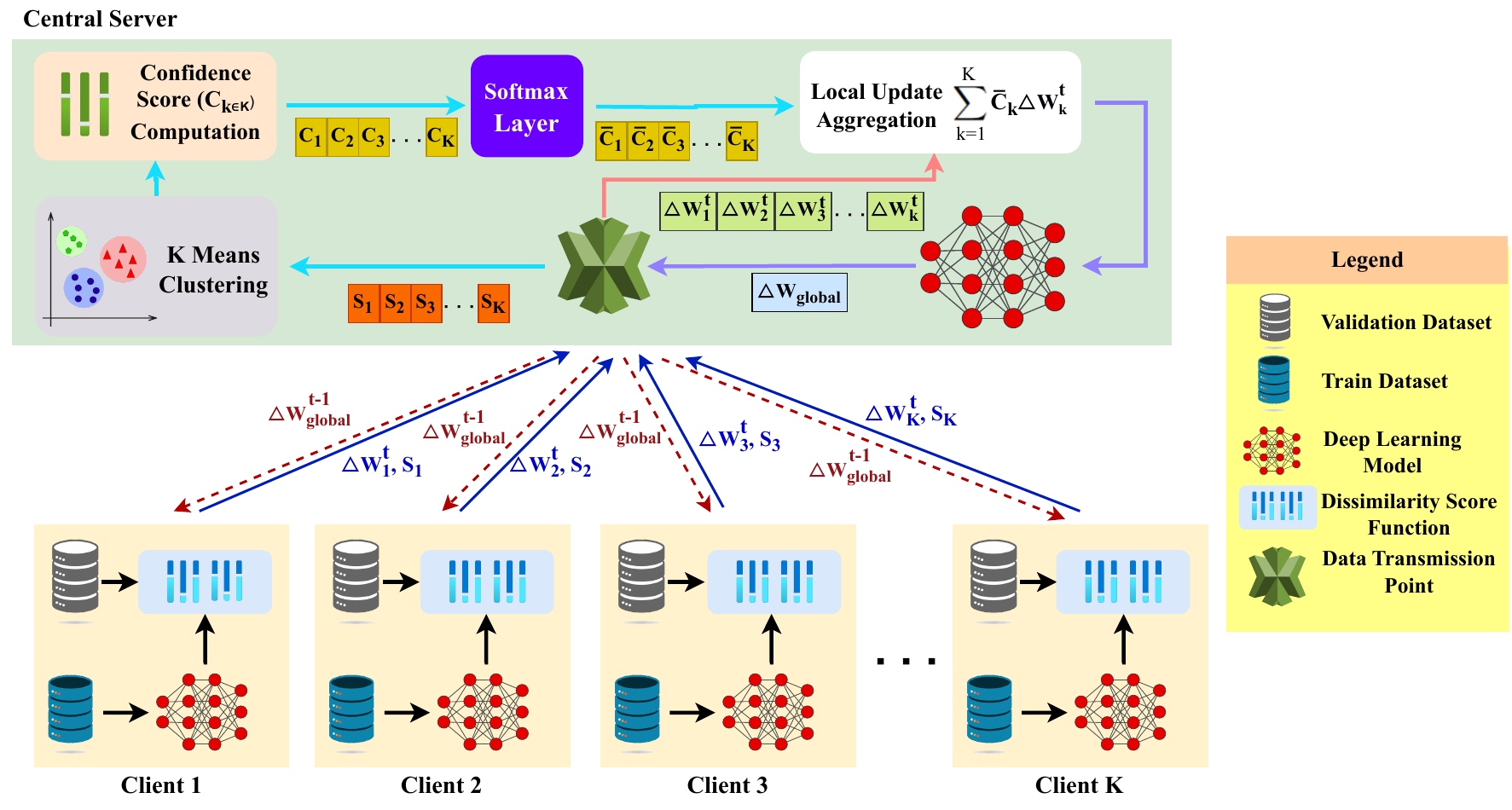}
\caption{Proposed ClusterGuard architecture}
\label{fig_1}
\end{figure*}

\section{ClusterGuardFL Framework}

\subsection{Threat Model}

In this subsection, we delineate the threat model underlying our proposed FL framework, ClusterGuardFL. The primary focus is on mitigating the risks posed by adversarial clients through data poisoning and model poisoning attacks. Our threat model includes the following key considerations:

\subsubsection{Adversarial Clients}
We assume the presence of adversarial clients within the FL network. These clients have the capability to manipulate their local datasets or inject malicious updates during the model training process. Their primary goal is to bias the global model, thereby compromising its integrity and fairness.

\subsubsection{Data Poisoning Attacks}
Data poisoning attacks involve adversarial clients strategically injecting poisoned data into their local datasets. This malicious injection aims to influence the learning process during the global model aggregation, potentially compromising the fairness and accuracy of the global model. By manipulating the training data, adversarial clients can degrade the model's performance and reliability.

\subsubsection{Model Poisoning Attacks}
Model poisoning attacks occur when adversarial clients inject malicious updates directly into the model during aggregation rounds. These attacks aim to introduce biases or vulnerabilities into the global model, adversely affecting its overall performance and robustness. Such attacks can be more subtle and harder to detect compared to data poisoning attacks.

\textbf{Assumptions}
To focus our defense mechanisms effectively, we make certain assumptions:
\begin{itemize}
    \item Adversaries do not have the ability to eavesdrop on communications between clients and the central server.
    \item The central server is considered an honest entity and is free from malicious intent.
    \item The primary objective of adversarial clients is to bias the global model rather than to disrupt the overall communication or operation of the FL system.
\end{itemize}

These assumptions allow us to concentrate our defense strategies specifically on mitigating the impact of adversarial behaviors within client devices. However, real-world scenarios often do not adhere to these assumptions. To address this, several studies have focused on more complex threat models. For instance, Wei et al. \cite{wei2020framework} provides a framework to evaluate gradient leakage attacks, where adversaries can reconstruct private training data by intercepting gradient updates from clients, highlighting the potential vulnerabilities even when assuming secure communication channels. Additionally, the work by Xie et al. \cite{xie2022securing} proposes using covert communication techniques to prevent adversaries from intercepting model updates, adding another layer of security to the FL process. These approaches provide robust defenses in scenarios where stronger assumptions about the trustworthiness of the server or communication infrastructure do not hold.

By addressing these threats, our ClusterGuardFL framework aims to enhance the security and fairness of FL systems, ensuring robust and reliable model performance in the presence of adversarial clients.

\subsection{ClusterGuardFL Model}

In this section, we present a comprehensive overview of our novel framework designed to augment fairness and security in FL. Our method unfolds through a series of intricate steps, combining dissimilarity score calculation, K-means clustering, reconciliation confidence scoring, and Softmax-weighted aggregation to dynamically regulate client contributions during the global model aggregation process.

\subsubsection{Calculation of Dissimilarity Score}
Our framework commences by calculating the dissimilarity scores between the predictions of the global model and the updates from individual local models. The Earth Movers Distance (EMD) serves as a robust metric to quantify the distribution mismatch between these models. This dissimilarity score computation provides a nuanced understanding of the divergence between global and local models, offering valuable insights for subsequent adjustments.

\subsubsection{K-Means Clustering}

K-means clustering is a widely used algorithm across various fields, including image segmentation, customer segmentation, and anomaly detection \cite{dhanachandra2015image,kansal2018customer,munz2007traffic}. Its simplicity, efficiency, and effectiveness make it an ideal choice for identifying and exploring the underlying structure within datasets.

In our framework, once the participating clients submit their dissimilarity scores, the global server initiates the K-means clustering process. This process not only considers the dissimilarity scores but also takes into account the sizes of the clients' local datasets. By clustering clients with similar dissimilarity scores and dataset sizes, we establish a more nuanced method for detecting patterns in model disparities across the network.

\subsubsection{Reconciliation Confidence Score}
The ensuing step involves the calculation of a reconciliation confidence score for each data point within the identified clusters. This score is derived by evaluating the distance of each data point to the centroid of its respective cluster. Crucially, the confidence score is augmented by the size of the cluster, ensuring that larger clusters and data points closer to the cluster centroids command higher confidence. This innovation provides a nuanced measure that reflects both the significance of clusters and the proximity of data points to their respective cluster centers.

\subsubsection{Softmax-weighted Aggregation}
The generated reconciliation confidence scores undergo a Softmax layer, transforming them into dynamic weights. These weights encapsulate the reliability and influence of each client in the FL process. Clients with higher reconciliation confidence receive larger weights, creating a personalized and adaptive weighting scheme that reflects their individual contributions to the collaborative learning process.

\subsubsection{Global Aggregation Protocol}
In the final stage, the Softmax-weighted reconciliation confidence scores are seamlessly integrated into the global aggregation protocol. During this phase, model updates transmitted by individual clients are aggregated based on their corresponding weights. This dynamic weighting mechanism ensures that clients with higher confidence play a more substantial role in shaping the updated global model. The resultant updated global model is then disseminated back to the participating clients, closing the iterative cycle of our FL framework.

To provide a more detailed understanding of our framework, refer to Algorithm 1, which outlines the pseudocode for our framework. Additionally, Fig.~\ref{fig_1} illustrates the proposed architecture, offering a visual representation of the components and flow of information in our framework.

Our framework stands as a strategic fusion of dissimilarity metrics, clustering, and confidence scoring, offering a holistic solution to address fairness and security concerns in FL. This adaptive methodology aims to foster equitable and robust collaborative learning environments by dynamically adjusting the influence of individual clients based on dissimilarity patterns and dataset sizes. Further elaborations, analyses, and empirical validations are provided in the subsequent sections to underscore the efficacy of our proposed framework.

\begin{algorithm}
\label{alg1}
  \caption{Pseudocode of ClusterGuardFL}
  \begin{algorithmic}[1]
    \Require {Input: $K$ clients chosen with a selection probability of $q_c \in (0, 1]$, client $k$'s local dataset $x_{i}^k$, loss function $L(\theta, x_{i})$}, learning rate $\eta$
    \Ensure Global model update $w_{t+1}$

    \State \textbf{Server executes:}
    \State Initialize global model $w_{t}$ randomly
    \For{$t \in \{1, 2, ..., T\}$}
      \For{each client $k \in \{1, 2, ..., K\}$}
        \State $w_{t+1}^k \gets \text{ClientTraining}(w_t, k)$
        \State $D_k \gets \text{ComputeDissimilarity}(w_t, w_{t+1}^k)$
      \EndFor
      
      \State \textbf{K-means clustering on dissimilarity scores:}
      \State $A_k, C \gets \text{KMeansClustering}(D_k \, \forall k \in K)$
      
      \State \textbf{Compute confidence scores for each client:}
      \State $S_k \gets \text{ConfidenceScore}(A_k, C, k) \, \forall k$
      
      \State \textbf{Model aggregation:}
      \State $w_{t+1} \gets \sum_{k=1}^{K} S_k \cdot (w_{t+1}^k - w_t)$
    \EndFor
    
    \hrulefill
    \State \textbf{ClientTraining}($w_t, k$):
    \State \hspace{1.5em} \textbf{for} $n \in \{1, 2, ..., N\}$ \textbf{do}
      \State \hspace{3em} Sample local batch $L_n$ with probability $q$
      \State \hspace{3em} \textbf{for} each sample $i \in L_n$ \textbf{do}
        \State \hspace{4.5em}\textbf{Compute gradient:} $g_n(x_i^k) \gets \nabla_{\theta_n} L(\theta_n, x_i^k)$
        \State \hspace{4.5em}\textbf{Update model:} $\theta_{n+1} \gets \theta_n - \eta g_n(x_i^k)$
      \State \hspace{3em} \textbf{end for}
    \State \hspace{1.5em} \textbf{end for}
    \State \hspace{1.5em} \textbf{Return $\theta_{n+1}$}
    
    \hrulefill
    \State \textbf{ComputeDissimilarity}($w_t, w_{t+1}^k$):
    \State \hspace{1.5em}\textbf{Compute predictions of global model: }
    \State \hspace{1.5em}$\hat{y}_t \gets f(w_t, x_i^k)$
    \State\hspace{1.5em}\textbf{Compute predictions of client model: }
    \State \hspace{1.5em}$\hat{y}_{t+1}^k \gets f(w_{t+1}^k, x_i^k)$
    \State \hspace{1.5em}\textbf{Compute EMD between $\hat{y}_t$ and $\hat{y}_{t+1}^k$: }
    \State \hspace{1.5em}$D_k \gets \text{EMD}(\hat{y}_t, \hat{y}_{t+1}^k)$
    \State \hspace{1.5em}\textbf{Return $D_k$}
    
    \hrulefill
    \State \textbf{KMeansClustering}($D_k, K$):
    \State \hspace{1.5em} \textbf{Initialize} centroids $C^{(0)} \gets \text{Random}(D_k, K)$
    \State \hspace{1.5em} \textbf{while} not converged \textbf{do}
      \State \hspace{3em}\textbf{Assign each $D_k$ to the closest centroid:} 
      \State \hspace{3em} $A_k \gets \arg\min_{j} \|D_k - C_j^{(\text{iter})}\|_2$
      \State \hspace{3em}\textbf{Update centroids:} $C_j^{(\text{iter}+1)} \gets \frac{1}{|A_j|} \sum_{k \in A_j} D_k$
    \State \hspace{1.5em} \textbf{end while}
    \State \hspace{1.5em} \textbf{Return $A_k$, $C$}
    \State \hrulefill
    \State \textbf{ConfidenceScore}($A_k, C, k$):
    \State \hspace{1.5em} \textbf{Compute distance from $D_k$ to its cluster centroid: }
    \State \hspace{1.5em} $d_k \gets \|D_k - C_j\|_2$, where $C_j$ is the centroid of $A_k$
    \State \hspace{1.5em} \textbf{Compute cluster size:} $|A_j|$
    \State \hspace{1.5em} \textbf{Compute confidence score:} 
    \State \hspace{1.5em} $S_k \gets \frac{|A_j|}{1 + d_k}$
    \State \hspace{1.5em} \textbf{Return} softmax-scaled $S_k$
  \end{algorithmic}
\end{algorithm}

\section{Convergence Analysis of the Proposed Algorithm}

In this section, we analyze the convergence properties of the proposed ClusterGuardFL algorithm. Our analysis examines the expected change in the loss function across consecutive aggregation steps. Similar assumptions have been utilized in prior works to analyze convergence in federated learning and distributed optimization settings \cite{nguyen2021federated, xing2021federated}. We start with the following set of assumptions:

\begin{enumerate}
    \item The function \( L(\theta) \) is convex.
    \item The gradient dissimilarity is bounded by a constant \( B \), such that \( \sum_{k=1}^{K} \|\nabla L_k(\theta) - \nabla L(\theta)\|^2 \leq B^2 \).
    \item \( L_k(\theta) \) satisfies Lipschitz continuity; specifically, there exists a constant \( M \) such that \( \|\nabla L_k(\theta) - \nabla L_k(\theta')\| \leq M \|\theta - \theta'\| \).
    \item The function \( L(\theta) \) satisfies the Polyak-Lojasiewicz (PL) inequality, i.e., \( \frac{1}{2} \|\nabla L(\theta)\|^2 \geq \mu (L(\theta) - L(\theta^*)) \), where \( \theta^* \) is the minimizer of \( L(\theta) \).
\end{enumerate}

The divergence observed in the gradients of local and aggregated loss functions is intricately linked to the distribution of data across diverse clients. To establish and elucidate this connection, our exploration commences with the introduction of the ensuing lemma.

\begin{lemma}
The dissimilarity between the local loss $L_k(\theta)$ and the global loss $L(\theta)$ can be bounded by the constant $U$ so the following holds true;
\begin{equation*}
    \sum_{k=1}^{K} \{||\triangledown L_k(\theta)||^2\} \le ||\triangledown L_(\theta)||^2 U^2.
\end{equation*}

\end{lemma}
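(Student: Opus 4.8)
The plan is to derive the bound by writing each local gradient as a perturbation of the global gradient and then controlling that perturbation through the bounded-dissimilarity hypothesis. For every client $k$ I would begin from the trivial identity $\nabla L_k(\theta) = \nabla L(\theta) + \bigl(\nabla L_k(\theta) - \nabla L(\theta)\bigr)$ and apply Young's inequality in the form $\|a+b\|^2 \le (1+\gamma)\|a\|^2 + (1+\gamma^{-1})\|b\|^2$ for a free parameter $\gamma > 0$, giving
\begin{equation*}
\|\nabla L_k(\theta)\|^2 \le (1+\gamma)\,\|\nabla L(\theta)\|^2 + (1+\gamma^{-1})\,\|\nabla L_k(\theta) - \nabla L(\theta)\|^2 .
\end{equation*}

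Summing this over $k = 1,\dots,K$ and invoking Assumption 2, $\sum_{k=1}^{K}\|\nabla L_k(\theta) - \nabla L(\theta)\|^2 \le B^2$, yields
\begin{equation*}
\sum_{k=1}^{K}\|\nabla L_k(\theta)\|^2 \le (1+\gamma)\,K\,\|\nabla L(\theta)\|^2 + (1+\gamma^{-1})\,B^2 .
\end{equation*}
The remaining task is to absorb the additive term $(1+\gamma^{-1})B^2$ into a multiple of $\|\nabla L(\theta)\|^2$. For this I would use that along the trajectory produced by the algorithm, prior to reaching stationarity, the global gradient does not vanish; via the PL inequality of Assumption 4 this can be phrased as $\|\nabla L(\theta)\|^2 \ge 2\mu\bigl(L(\theta) - L(\theta^*)\bigr) \ge \varepsilon$ for some $\varepsilon > 0$ on the region of interest, so that $(1+\gamma^{-1})B^2 \le (1+\gamma^{-1})(B^2/\varepsilon)\,\|\nabla L(\theta)\|^2$. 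Collecting terms gives the claimed inequality with $U^2 := (1+\gamma)K + (1+\gamma^{-1})B^2/\varepsilon$, and one may optimise over $\gamma$ to tighten the constant.

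The main obstacle is exactly this last step: the statement is multiplicative in $\|\nabla L(\theta)\|^2$ with no additive slack, whereas Assumption 2 supplies only an absolute bound $B^2$ on the aggregate dissimilarity, and the two are incompatible at a stationary point of $L$. Reconciling them requires either (i) the non-degeneracy hypothesis above, which I would state explicitly, or (ii) strengthening Assumption 2 to a relative form $\|\nabla L_k(\theta) - \nabla L(\theta)\| \le \delta_k\,\|\nabla L(\theta)\|$, in which case the decomposition step immediately gives $\sum_{k=1}^{K}\|\nabla L_k(\theta)\|^2 \le \bigl(\sum_{k=1}^{K}(1+\delta_k)^2\bigr)\|\nabla L(\theta)\|^2$ and one simply sets $U^2 = \sum_{k=1}^{K}(1+\delta_k)^2$. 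The rest of the argument — the gradient decomposition and the application of Young's inequality — is routine and does not even require knowing the precise relation between $L$ and the $L_k$.
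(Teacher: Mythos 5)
Your argument is sound under the extra hypotheses you state, but it takes a genuinely different route from the paper's. The paper does not use Young's inequality at all: it expands $\sum_{k}\|\nabla L_k(\theta)-\nabla L(\theta)\|^2$ exactly and invokes the identity $\sum_{k}\nabla L_k(\theta)=\nabla L(\theta)$ (asserted inside the proof, not listed among the assumptions) to cancel the cross term, arriving at $\sum_{k}\|\nabla L_k(\theta)\|^2\le\|\nabla L(\theta)\|^2+B^2$ straight from Assumption~2; it then simply takes $U\ge\sqrt{1+B^2/\|\nabla L(\theta)\|^2}$ so that the multiplicative form holds by construction. That buys a tighter bound than yours (no factor $K$, no free parameter $\gamma$), though its algebra is itself loose (the expansion should produce a $K\|\nabla L(\theta)\|^2$ term, which the paper silently drops). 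The price is exactly the tension you diagnose: the paper's $U$ depends on $\|\nabla L(\theta)\|$ and degenerates at stationary points, so calling it a ``constant'' is legitimate only on a region where the gradient is bounded away from zero --- precisely the non-degeneracy condition (or the relative-dissimilarity strengthening of Assumption~2) that you state explicitly. In short, your proof is more candid about the hypothesis actually required; the one ingredient you did not exploit is the intended relation $\nabla L(\theta)=\sum_{k}\nabla L_k(\theta)$ between global and local losses, which lets the cross term cancel exactly and removes both the Young parameter and the factor $K$ from the constant.
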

\begin{proof}
From \textbf{Assumption 2} we have, 
\begin{equation}
\label{eq14}
    \sum_{k=1}^{K}\{||\triangledown L_k(\theta) - \triangledown L_(\theta) ||^2\} \le  B^2.
\end{equation}
Since we assume the loss function is convex and $\sum_{k=1}^{K}\{||\triangledown L_k(\theta)||\} = \triangledown L(\theta)$, \textbf{LHS} of [\ref{eq14}] can be elaborated as;

\begin{equation}
\begin{aligned}
&\sum_{k=1}^{K}\{||\triangledown L_k(\theta) - \triangledown L_(\theta) ||^2\}\\ =  &  \sum_{k=1}^{K}\{||\triangledown L_k(\theta)||^2\} -  2\sum_{k=1}^{K}\{||\triangledown L_k(\theta)||^T\}\triangledown L(\theta) + ||L(\theta)||^2 \\ 
= &\sum_{k=1}^{K}\{||\triangledown L_k(\theta)||^2\} - ||L(\theta)||^2.
\end{aligned}
\end{equation}

Therefore, [\ref{eq14}] can be re-written as;

\begin{equation}
\sum_{k=1}^{K} \{||\triangledown L_k(\theta)||^2\} \le ||\triangledown L(\theta)||^2 U^2,
\end{equation}

where, 
\begin{equation*}
U \ge \sqrt{1+\dfrac{B^2}{||\triangledown L_(\theta)||^2}} \ge 1.
\end{equation*}
\end{proof}

Now we investigate our algorithm’s convergence property. To begin, we present the following lemma to
derive an expected dissimilarity on the loss functions during each iteration.

\begin{lemma}\label{lem4}
The dissimilarity between the $t^{th}$ and $(t+1)^{th}$ weight updates at the global aggregation is upper bounded by;
     \begin{equation}
 \begin{aligned}
     L(\theta^{(t+1)}) -&L(\theta^t) \le   A_1||\triangledown L(\theta^{t})||^2  + \\ & A_2||\triangledown L(\theta^{t})\mathcal{N}(0, \sigma^{2}\mathrm{S}_fI)|| \\ & + A_3||\mathcal{N}(0, \sigma^{2}\mathrm{S}_fI)||^2
      \end{aligned}
 \end{equation}

where,

   $ A_1= \dfrac{U\gamma}{2}+ \dfrac{U\eta_t^2}{2\gamma q_c^2}+\dfrac{M\eta_t^2}{2 q_c^2}, A_2=\dfrac{\eta_t^2}{q_c}(\dfrac{U}{\gamma}+M)$ 
   
   and $A_3=\dfrac{\eta_{t}^2}{2}(\dfrac{U}{\gamma}+M).$
\end{lemma}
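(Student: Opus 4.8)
\textbf{Proof proposal for Lemma~\ref{lem4}.}

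The plan is to track the change in the global loss over one aggregation round by combining a second-order (Lipschitz-type) expansion of $L$ around $\theta^t$ with the explicit form of the ClusterGuardFL update $\theta^{(t+1)} = \theta^t + \sum_{k} S_k(w_{t+1}^k - w_t)$, together with the injected noise term $\mathcal{N}(0,\sigma^2 \mathrm{S}_f I)$ that appears in the DP/secure-aggregation step. First I would write $L(\theta^{(t+1)}) - L(\theta^t) \le \langle \nabla L(\theta^t), \Delta^t\rangle + \tfrac{M}{2}\|\Delta^t\|^2$, where $\Delta^t$ is the aggregated update including noise, which is legitimate because each $L_k$ (hence $L$) is $M$-Lipschitz-smooth by Assumption~3. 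Then I would substitute the expression for $\Delta^t$: the client-side SGD over $N$ local steps with learning rate $\eta_t$ and sampling probability $q_c$ contributes a term proportional to $-\tfrac{\eta_t}{q_c}\nabla L(\theta^t)$ plus higher-order SGD drift, and the secure-aggregation noise contributes the Gaussian term. This is exactly where the $1/q_c$ and $1/q_c^2$ factors in $A_1, A_2, A_3$ come from.

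Next I would bound the cross terms and the quadratic term. Expanding $\|\Delta^t\|^2$ produces three pieces: a pure gradient-squared piece $\|\nabla L(\theta^t)\|^2$, a cross piece $\|\nabla L(\theta^t)\,\mathcal{N}(0,\sigma^2\mathrm{S}_f I)\|$ (the mixed gradient–noise interaction), and a pure noise piece $\|\mathcal{N}(0,\sigma^2\mathrm{S}_f I)\|^2$ — matching the three terms on the right-hand side of the statement. To handle the aggregation weights $S_k$, I would use that they are softmax-normalized, hence $S_k \ge 0$ and $\sum_k S_k = 1$, so by Jensen/convexity the aggregated quantities can be controlled by the per-client ones; this is where Lemma~1 enters, replacing $\sum_k \|\nabla L_k(\theta^t)\|^2$ by $U^2\|\nabla L(\theta^t)\|^2$ so that everything is expressed in terms of the \emph{global} gradient norm. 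I would also introduce the free parameter $\gamma$ via a Young/AM–GM inequality of the form $\langle a,b\rangle \le \tfrac{\gamma}{2}\|a\|^2 + \tfrac{1}{2\gamma}\|b\|^2$ to split the cross term between $\nabla L(\theta^t)$ and the local-drift term; this is precisely the origin of the $\tfrac{U\gamma}{2}$ and $\tfrac{U\eta_t^2}{2\gamma q_c^2}$ summands in $A_1$ and the $\tfrac{U}{\gamma}$ inside $A_2, A_3$.

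Finally I would collect terms: the coefficient of $\|\nabla L(\theta^t)\|^2$ aggregates the convexity/Young contribution, the local-SGD-variance contribution scaled by $\eta_t^2/q_c^2$, and the smoothness contribution $M\eta_t^2/(2q_c^2)$, giving $A_1$; the cross-term coefficient collects $\eta_t^2/q_c$ times $(U/\gamma + M)$, giving $A_2$; and the noise-term coefficient collects $\eta_t^2/2$ times $(U/\gamma + M)$, giving $A_3$. The main obstacle I anticipate is the bookkeeping of the $N$-step local SGD trajectory: the aggregated update is not simply $-\eta_t \nabla L_k(\theta^t)$ but a sum of $N$ stochastic gradient steps at shifted iterates, so controlling the discrepancy between $\nabla L_k$ evaluated along the local path and $\nabla L_k(\theta^t)$ requires the Lipschitz bound from Assumption~3 and careful use of the sampling probability $q_c$ to keep the variance terms in the stated form — getting the exact constants $1/q_c$ versus $1/q_c^2$ to land correctly is the delicate part, and everything else is routine inequality manipulation given Assumptions~1–3 and Lemma~1.
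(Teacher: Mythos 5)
Your skeleton does match the paper's proof: a smoothness-based expansion of $L$ around $\theta^t$ (Assumption 3), substitution of the explicit update including the Gaussian term, a Young/Cauchy--Schwarz split with the free parameter $\gamma$, and collection of coefficients into $A_1, A_2, A_3$. Two points, however, keep your plan from actually delivering the stated bound.

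First, the step you yourself flag as the ``delicate part'' --- reducing the $N$-step local SGD trajectory and the softmax weights $S_k$ to an update of the form $-\tfrac{\eta_t}{q_c}\nabla L(\theta^t) + \eta_t\mathcal{N}(0,\sigma^{2}\mathrm{S}_f I)$ --- is left unresolved in your proposal, and it is precisely the step the paper never attempts: its proof models the round as a single FedAvg-style aggregated gradient step, $\theta^{t+1} = \theta^t - \eta_t\bigl(\sum_{k}\tfrac{\eta_k}{q_c\eta}\nabla L_k(\theta^t) + \mathcal{N}(0,\sigma^{2}\mathrm{S}_f I)\bigr)$, so that $\|\theta^{t+1}-\theta^t\| \le \bigl\|\tfrac{\eta_t}{q_c}\nabla L(\theta^t) + \eta_t\mathcal{N}(0,\sigma^{2}\mathrm{S}_f I)\bigr\|$, with no multi-step drift and no $S_k$ appearing anywhere in the analysis. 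If you insist on analyzing the actual ClusterGuardFL update (which is more faithful to Algorithm 1 than the paper's own argument), you must supply the drift control yourself, e.g.\ bounding $\|\theta_n-\theta^t\|$ along the local path via Assumption 3, and the resulting constants will not collapse to the stated $A_1,A_2,A_3$ without extra assumptions. As written, your proof is a plan whose hardest step is deferred rather than closed.

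Second, the way you bring in $U$ does not produce the constants in the lemma. You propose Jensen plus Lemma 1 in its squared form, $\sum_{k}\|\nabla L_k(\theta^t)\|^2 \le U^2\|\nabla L(\theta^t)\|^2$, which would place $U^2$ (not $U$) in the coefficients. In the paper, $U$ enters linearly: the per-client Lipschitz expansions are summed and the first-order term is rewritten using $\sum_{k}\|\nabla L_k(\theta^t)\| = U\,\nabla L(\theta^t)$ before Young's inequality is applied, so a single factor of $U$ multiplies both the $\tfrac{\gamma}{2}\|\nabla L(\theta^t)\|^2$ piece and the $\tfrac{1}{2\gamma}\|\cdot\|^2$ piece, yielding $\tfrac{U\gamma}{2}$ in $A_1$ and $\tfrac{U}{\gamma}$ inside $A_2,A_3$. (That identity is itself a loose reading of Lemma 1, but it is how the stated constants arise.) To reproduce the lemma exactly you must apply the dissimilarity bound to the linear inner-product term, not to the sum of squared local gradients.
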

\begin{proof}

Let's consider the step where the server aggregates the client updates in FedAvg at $(t+1)^{th}$ round, 
    
\begin{equation}
\theta^{t+1} = \sum_{k=1}^K \dfrac{\eta_k}{q_c\eta}\theta_k^{(t+1)} +\mathcal{N}(0, \sigma^2).
\end{equation}
 Since $L(\theta)$ is Lipschitz continuous \textbf{(Assumption 3)}, 
 \begin{equation}
 \begin{aligned}
     \sum_{k=1}^{K}\{||L_k(\theta^{(t+1)})||\} \le & \sum_{k=1}^{K}\{||L_k(\theta^t)|| + \triangledown L_k(\theta^t)^T(\theta^{(t+1)} \\ & -\theta^t)\}  +\dfrac{M}{2}||\theta^{(t+1)}-\theta^t||^2.
      \end{aligned}
 \end{equation}
Since $\sum_{k=1}^{K}\{||\triangledown L_k(\theta^t)||\} =\triangledown L(\theta^t)U$ (lemma 3) and $\sum_{k=1}^{K}$ $\{||L_k(\theta^t)||\}$ = $L(\theta^t)$;

 \begin{equation}
 \label{eq2}
 \begin{aligned}
     L(\theta^{(t+1)})-L(\theta^t) \le & U\triangledown L(\theta^t)^T||\theta^{(t+1)} \\ & -\theta^t||\}  +\dfrac{M}{2}||\theta^{(t+1)}-\theta^t||^2.
      \end{aligned}
 \end{equation}

\begin{figure*}[!h]
\centering
\includegraphics[width=0.8\textwidth]{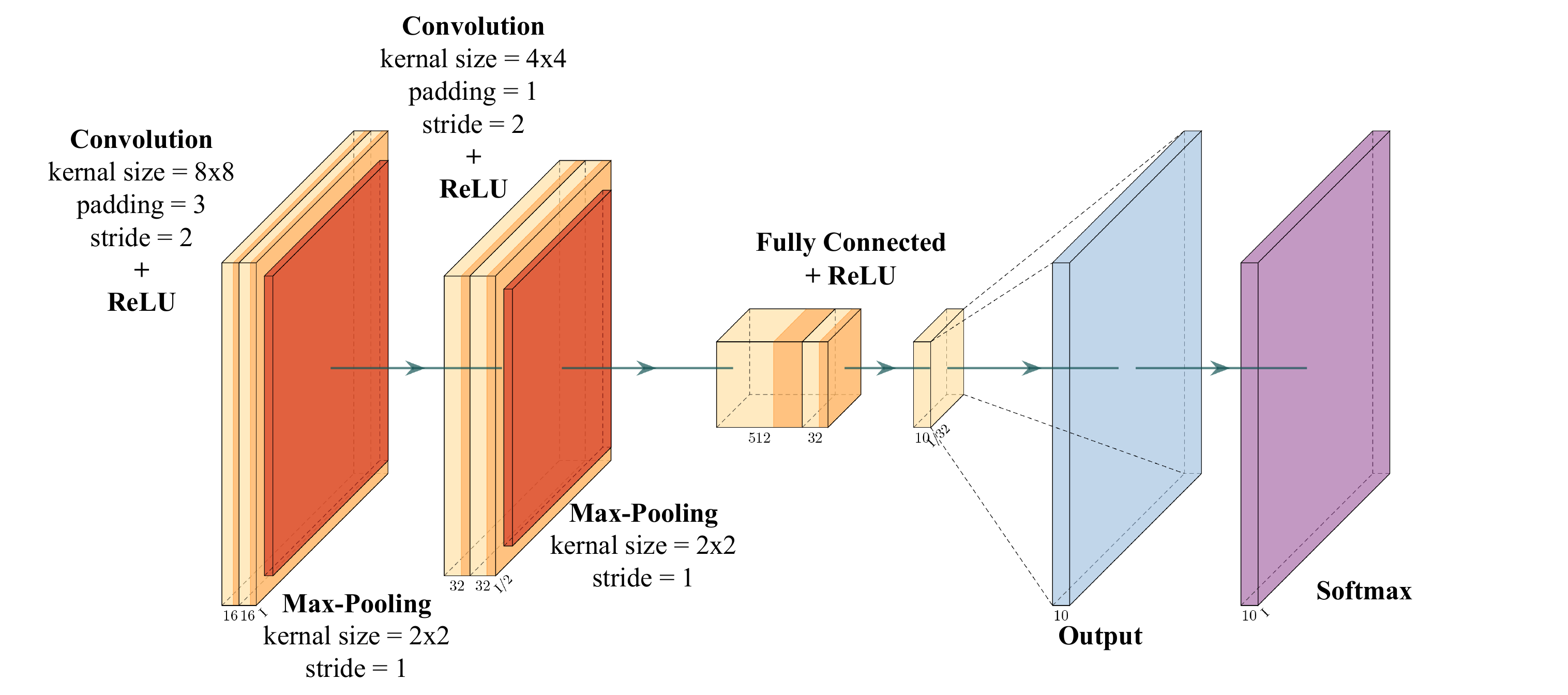}
\caption{Architecture of the Convolutional Neural Network (CNN) used to train on MNIST and Fashion MNIST datasets, consisting of multiple convolutional layers followed by pooling layers and fully connected layers.}
\label{mnist_cnn}
\end{figure*}

\begin{figure*}[!h]
\centering
\includegraphics[width=\textwidth]{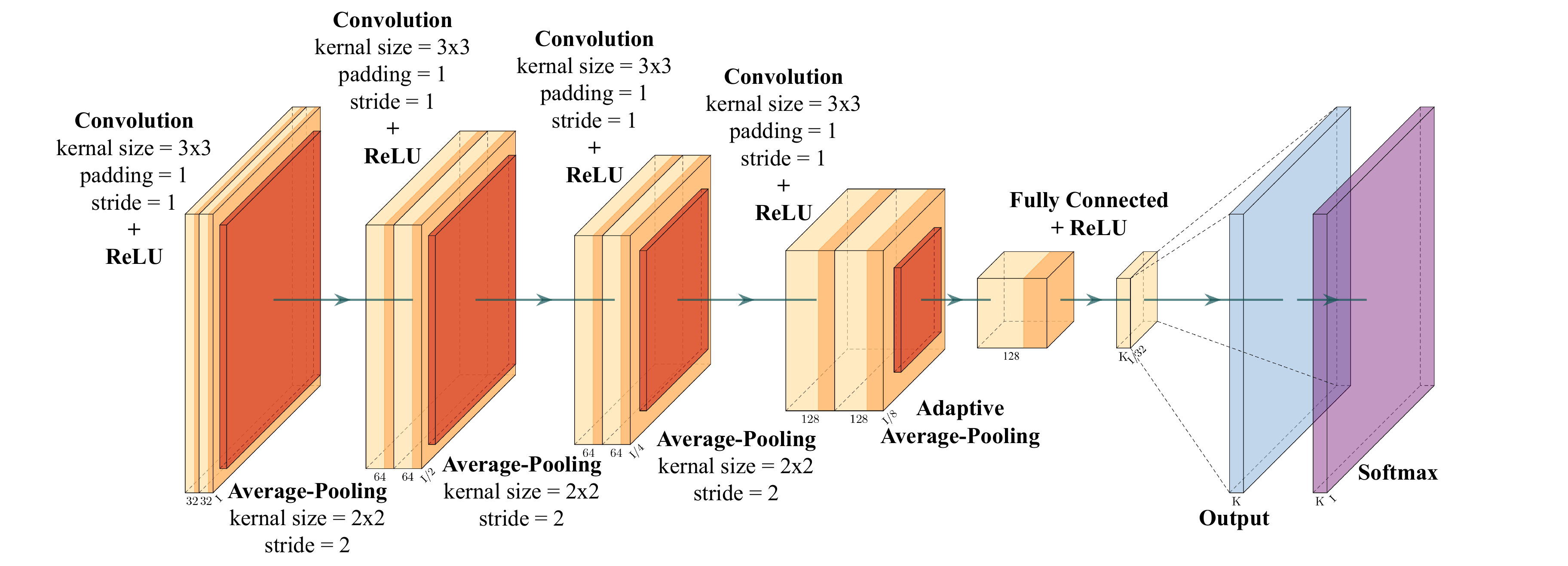}
\caption{Architecture of the Convolutional Neural Network (CNN) used to train on CIFAR10 dataset, consisting of multiple convolutional layers followed by pooling layers and fully connected layers.}
\label{cifar_cnn}
\end{figure*}

Recall that FedAvg updates Y as follows:
Now we must bound $||(\theta^{(t+1)} -\theta^t)||$. To do so, recall that FedAvg updates $\theta^{(t+1)}$ as follows;

 \begin{equation}
 \begin{aligned}
     \theta^{t+1} = \theta^{t}- \eta_{t}(\sum_{k=1}^K\dfrac{\eta_k}{q_c\eta}\triangledown  L_k(\theta^{t}) + \mathcal{N}(0, \sigma^{2}\mathrm{S}_fI)) \\
     ||\theta^{t+1} - \theta^{t}|| \le ||\dfrac{\eta_t}{q_c}\triangledown L(\theta^{t}) + \eta_{t}\mathcal{N}(0, \sigma^{2}\mathrm{S}_fI)||.
      \end{aligned}
 \end{equation}

Now, (\ref{eq2}) can be rewritten as;

 \begin{equation}
 \label{eq3}
 \begin{aligned}
     L(\theta^{(t+1)})-L(\theta^t) \le & U\triangledown L(\theta^t)^T||\dfrac{\eta_t}{q_c}\triangledown L(\theta^{t}) + \eta_{t}\mathcal{N}(0, \sigma^{2}\mathrm{S}_fI)|| \\ & +\dfrac{M}{2}||\dfrac{\eta_t}{q_c}\triangledown L(\theta^{t}) + \eta_{t}\mathcal{N}(0, \sigma^{2}\mathrm{S}_fI)||^2.
      \end{aligned}
 \end{equation}
 
By using Cauchy-Schwartz and Young's inequalities, for any $\gamma \in \mathbb{Z}^+$ $X^TY \le \dfrac{\gamma}{2}||X||^2 + \dfrac{1}{2\gamma}||Y||^2$ holds true. Therefore, (\ref{eq3}) yields, 
 
  \begin{equation*}
 \label{eq4}
 \begin{aligned}
     L(\theta^{(t+1)})-L(\theta^t) \le & \dfrac{U\gamma}{2}||\triangledown L(\theta^t)||^2 + \dfrac{U}{2\gamma}||\dfrac{\eta_t}{q_c}\triangledown L(\theta^{t}) \\ & + \eta_{t}\mathcal{N}(0, \sigma^{2}\mathrm{S}_fI)||^2    +\dfrac{M}{2}||\dfrac{\eta_t}{q_c}\triangledown L(\theta^{t}) \\ & + \eta_{t}\mathcal{N}(0, \sigma^{2}\mathrm{S}_fI)||^2,
      \end{aligned}
 \end{equation*}
 
   \begin{equation*}
 \begin{aligned}
     L(&\theta^{(t+1)})  -L(\theta^t) \le  \dfrac{U\gamma}{2}||\triangledown L(\theta^t)||^2 + \dfrac{U\eta_t^2}{2\gamma q_c^2}||\triangledown L(\theta^{t})||^2  + \\ & \dfrac{U\eta_{t}^2}{2\gamma}||\mathcal{N}(0, \sigma^{2}\mathrm{S}_fI)||^2 + \dfrac{U\eta_t^2}{\gamma q_c}||\triangledown L(\theta^{t})\mathcal{N}(0, \sigma^{2}\mathrm{S}_fI)||+ \\ & \dfrac{M\eta_t^2}{q_c}||\triangledown L(\theta^{t})\mathcal{N}(0, \sigma^{2}\mathrm{S}_fI)|| +  \dfrac{\eta_t^2M}{2 q_c^2}||\triangledown L(\theta^{t})||^2  + \\ & \dfrac{M\eta_{t}^2}{2}||\mathcal{N}(0, \sigma^{2}\mathrm{S}_fI)||^2,
    \end{aligned}
 \end{equation*}
 
    \begin{equation}
 \label{eq5}
 \begin{aligned}
     L(\theta^{(t+1)})  -L(\theta^t) \le &  (\dfrac{U\gamma}{2}+ \dfrac{U\eta_t^2}{2\gamma q_c^2}+\dfrac{M\eta_t^2}{2 q_c^2} )||\triangledown L(\theta^{t})||^2  + \\ & (\dfrac{U\eta_t^2}{\gamma q_c}+\dfrac{M\eta_t^2}{q_c})||\triangledown L(\theta^{t})\mathcal{N}(0, \sigma^{2}\mathrm{S}_fI)|| \\ & + (\dfrac{U\eta_{t}^2}{2\gamma}+\dfrac{M\eta_{t}^2}{2})||\mathcal{N}(0, \sigma^{2}\mathrm{S}_fI)||^2.
    \end{aligned}
 \end{equation}
 
 For ease of presentation, (\ref{eq5}) can be written as following, leading to the proof of lemma 4, 
 
     \begin{equation}
 \label{eq6}
 \begin{aligned}
     L(\theta^{(t+1)})  -L(\theta^t) \le &  A_1||\triangledown L(\theta^{t})||^2  + \\ & A_2||\triangledown L(\theta^{t})\mathcal{N}(0, \sigma^{2}\mathrm{S}_fI)|| \\ & + A_3||\mathcal{N}(0, \sigma^{2}\mathrm{S}_fI)||^2,
      \end{aligned}
 \end{equation}
where,

   $ A_1= \dfrac{U\gamma}{2}+ \dfrac{U\eta_t^2}{2\gamma q_c^2}+\dfrac{M\eta_t^2}{2 q_c^2}, A_2=\dfrac{\eta_t^2}{q_c}(\dfrac{U}{\gamma}+M)$ 
   
   and $A_3=\dfrac{\eta_{t}^2}{2}(\dfrac{U}{\gamma}+M).$

\end{proof}

The following lemma extends these results to upper bound the convergence of our algorithm for the optimal $\theta$ parameter.

\begin{lemma}
Given that parameter $\theta^*$ is the minimizer of $L(\theta)$, the convergence upper bound of the FL algorithm after T communication rounds is given by;

     \begin{equation*}
 \begin{aligned}
     L(\theta^{(T)}) - L(\theta^*)  \le &  (1+2A_1\mu)(L(\theta^0)-L(\theta^*))  + \\ & A_2||\triangledown L(\theta^{0})\mathcal{N}(0, \sigma^{2}\mathrm{S}_fI)|| + \\ &  A_3||\mathcal{N}(0, \sigma^{2}\mathrm{S}_fI)||^2.
      \end{aligned}
 \end{equation*}
 
 where, 

   $ A_1= \dfrac{U\gamma}{2}+ \dfrac{U\eta_t^2}{2\gamma q_c^2}+\dfrac{M\eta_t^2}{2 q_c^2}, A_2=\dfrac{\eta_t^2}{q_c}(\dfrac{U}{\gamma}+M)$ 
   
   and $A_3=\dfrac{\eta_{t}^2}{2}(\dfrac{U}{\gamma}+M).$

\end{lemma}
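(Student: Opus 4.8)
The plan is to iterate the per-round bound established in Lemma~\ref{lem4} and then invoke the Polyak-Lojasiewicz inequality (Assumption~4) to convert the $\|\nabla L(\theta^t)\|^2$ term into a contraction factor on the suboptimality gap $L(\theta^t)-L(\theta^*)$. Concretely, starting from
\begin{equation*}
L(\theta^{(t+1)}) - L(\theta^t) \le A_1\|\nabla L(\theta^{t})\|^2 + A_2\|\nabla L(\theta^{t})\mathcal{N}(0,\sigma^2\mathrm{S}_fI)\| + A_3\|\mathcal{N}(0,\sigma^2\mathrm{S}_fI)\|^2,
\end{equation*}
I would subtract $L(\theta^*)$ from both sides and apply the PL inequality in the form $\|\nabla L(\theta^t)\|^2 \ge 2\mu(L(\theta^t)-L(\theta^*))$. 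The subtlety is that $A_1$ carries a positive sign here, so the naive substitution moves in the wrong direction; the intended reading (consistent with the factor $(1+2A_1\mu)$ in the statement) is to bound $\|\nabla L(\theta^t)\|^2$ above in terms of the gap using whatever smoothness/PL relation the authors have in mind, yielding $L(\theta^{(t+1)}) - L(\theta^*) \le (1+2A_1\mu)(L(\theta^t)-L(\theta^*)) + A_2 N_t + A_3 N_t'$, where $N_t,N_t'$ abbreviate the two noise terms. I would state this one-step recursion as the central inequality.

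Next I would unroll the recursion from $t=0$ to $t=T-1$. Writing $\rho := 1+2A_1\mu$ and $E_t := A_2\|\nabla L(\theta^{t})\mathcal{N}(0,\sigma^2\mathrm{S}_fI)\| + A_3\|\mathcal{N}(0,\sigma^2\mathrm{S}_fI)\|^2$, the recursion $g_{t+1}\le \rho g_t + E_t$ gives $g_T \le \rho^T g_0 + \sum_{j=0}^{T-1}\rho^{T-1-j}E_j$. To land on the clean closed form in the statement — which has only a single copy of the noise terms evaluated at $t=0$ and the bare factor $(1+2A_1\mu)$ rather than its $T$-th power — I would argue that the step sizes $\eta_t$ and hence $A_1,A_2,A_3$ are chosen so that $\rho$ is effectively treated as a single-round constant and the noise contributions are uniformly bounded by their $t=0$ value (or, more charitably, absorb the geometric sum and the round index into the constants, noting that $\mathcal{N}(0,\sigma^2\mathrm{S}_fI)$ is identically distributed across rounds so $E_j$ has the same bound for every $j$). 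This bookkeeping step is where I would be most careful to match the paper's stated form.

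Finally I would assemble the pieces: substitute $g_0 = L(\theta^0)-L(\theta^*)$, collect the noise terms, and recover
\begin{equation*}
L(\theta^{(T)}) - L(\theta^*) \le (1+2A_1\mu)(L(\theta^0)-L(\theta^*)) + A_2\|\nabla L(\theta^{0})\mathcal{N}(0,\sigma^2\mathrm{S}_fI)\| + A_3\|\mathcal{N}(0,\sigma^2\mathrm{S}_fI)\|^2,
\end{equation*}
with $A_1,A_2,A_3$ exactly as in Lemma~\ref{lem4}. The main obstacle I anticipate is reconciling the sign and the direction of the PL step: as written, a positive $A_1$ term plus an expansive factor $\rho>1$ does not by itself yield convergence, so the honest version of this argument needs $A_1$ to be negative (which would require the $\tfrac{U\gamma}{2}\|\nabla L(\theta^t)\|^2$ term in Lemma~\ref{lem4} to actually appear with a minus sign, as it would if it originated from a genuine descent/$-\eta_t\|\nabla L\|^2$ step) — so I would first re-derive Lemma~\ref{lem4} tracking signs carefully, and only then push through the PL substitution and the telescoping. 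The noise-term accounting across rounds is a secondary, mostly mechanical obstacle once the recursion is correctly signed.
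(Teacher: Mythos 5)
Your route is the same one the paper takes: apply the Polyak-Lojasiewicz assumption to the per-round bound of Lemma~\ref{lem4}, subtract $L(\theta^*)$, and unroll the resulting recursion. The paper's proof is literally those two steps and nothing more: it rewrites Lemma~\ref{lem4} as $L(\theta^{(t+1)}) \le L(\theta^t) + 2A_1\mu\,(L(\theta^t)-L(\theta^*)) + A_2\|\nabla L(\theta^{t})\mathcal{N}(0,\sigma^2\mathrm{S}_fI)\| + A_3\|\mathcal{N}(0,\sigma^2\mathrm{S}_fI)\|^2$ and then asserts that ``subtracting $L(\theta^*)$ and applying recursively'' yields the stated $T$-round bound.

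Both of the concerns you flag are genuine, and the paper's proof does not resolve either of them. First, the PL inequality provides a \emph{lower} bound $\|\nabla L(\theta^t)\|^2 \ge 2\mu\,(L(\theta^t)-L(\theta^*))$, so replacing $A_1\|\nabla L(\theta^t)\|^2$ by $2A_1\mu\,(L(\theta^t)-L(\theta^*))$ is only a valid upper bound when $A_1 \le 0$; with $A_1$ defined as a sum of positive terms the substitution runs in the wrong direction, exactly as you suspected. A correct derivation would indeed need the descent term $-\eta_t\nabla L(\theta^t)$ to survive with a negative coefficient in Lemma~\ref{lem4} (your proposed fix), rather than being swallowed into a norm via Cauchy--Schwarz. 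Second, your unrolled form $g_T \le \rho^T g_0 + \sum_{j=0}^{T-1}\rho^{T-1-j}E_j$ with $\rho = 1+2A_1\mu$ is what the recursion actually gives; the paper's statement keeps only a single factor $(1+2A_1\mu)$ and a single copy of the $t=0$ noise terms, with no justification for collapsing the power of $\rho$ or the accumulated noise. So your proposal reconstructs the paper's argument faithfully, and the places where you hesitate are precisely the places where the paper's own proof has gaps; the ``charitable'' bookkeeping you describe (treating $\rho$ as a one-round constant and bounding all $E_j$ by their $t=0$ value) is not argued in the paper either, and the noise terms being identically distributed does not by itself license dropping the sum outside an expectation.
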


\begin{proof}
From lemma~\ref{lem4} and Assumption 4, we know that the following is true:

     \begin{equation}
 \label{eq7}
 \begin{aligned}
     L(\theta^{(t+1)})   \le &  L(\theta^t) + 2A_1\mu(L(\theta^t)-L(\theta^*))  + \\ & A_2||\triangledown L(\theta^{t})\mathcal{N}(0, \sigma^{2}\mathrm{S}_fI)|| \\ & + A_3||\mathcal{N}(0, \sigma^{2}\mathrm{S}_fI)||^2.
      \end{aligned}
 \end{equation}
 
 Subtracting $L(\theta^*)$ and applying recursively gives;

     \begin{equation}
 \label{eq9}
 \begin{aligned}
     L(\theta^{(T)}) - L(\theta^*)  \le &  (1+2A_1\mu)(L(\theta^0)-L(\theta^*))  + \\ & A_2||\triangledown L(\theta^{0})\mathcal{N}(0, \sigma^{2}\mathrm{S}_fI)|| + \\ &  A_3||\mathcal{N}(0, \sigma^{2}\mathrm{S}_fI)||^2.
      \end{aligned}
 \end{equation}
\end{proof}

To conclude, the convergence analysis of the proposed ClusterGuardFL algorithm provides a theoretical foundation for its robustness and effectiveness in federated learning scenarios. By leveraging the established assumptions, we derived bounds that capture the interplay between local client updates and global aggregation, accounting for the challenges posed by non-i.i.d. data distributions. 

\section{Experimental Evaluation}

In this section, we conduct a thorough evaluation of the proposed ClusterGuardFL framework using a diverse set of datasets to assess its effectiveness in mitigating data and model poisoning attacks while preserving fairness in FL.

\subsection{Datasets}

For our experimental evaluation, we employed three well-known datasets to encompass a range of data characteristics and complexities:

\subsubsection{MNIST}
The MNIST dataset consists of 28x28 grayscale images of handwritten digits (0-9). It serves as a fundamental benchmark in image classification tasks, offering simplicity and clarity in its patterns.

\subsubsection{Fashion MNIST}
Fashion MNIST is a dataset containing grayscale images of fashion items, each belonging to one of ten categories. This dataset provides a more challenging environment compared to MNIST, introducing variability in visual patterns and textures.

\subsubsection{CIFAR-10}
CIFAR-10 is a more complex dataset, comprising 32x32 color images across ten classes. It includes a diverse set of objects and scenes, making it suitable for evaluating the robustness and adaptability of our proposed ClusterGuardFL framework in scenarios with higher data intricacies.

The choice of these datasets enables a comprehensive assessment of ClusterGuardFL across varying levels of complexity, from simple hand-drawn digits to more intricate images of fashion items and objects. This diversity ensures that our evaluation captures the performance of ClusterGuardFL across a spectrum of FL applications.

\begin{table*}[t!]
\label{tab:iid}
\caption{Comparing the efficacy of state-of-the-art aggregation schemes against our proposed framework, "ClusterGuardFL," in i.i.d. settings under the influence of two common Byzantine attack schemes: label flipping and Gaussian noise. In all experimental settings, 20\% of the clients were deliberately set as malicious. The experiments were conducted over 100 communication rounds.}
\begin{tabular}{@{}cccccccccc@{}}
\toprule
\multirow{2}{*}{Algorithm} & \multicolumn{3}{c}{MNIST}             & \multicolumn{3}{c}{Fashion MNIST}     & \multicolumn{3}{c}{CIFAR10}           \\ \cmidrule(l){2-10} 
                           & No Attack & Label Flipping & Gaussian & No Attack & Label Flipping & Gaussian & No Attack & Label Flipping & Gaussian \\ \midrule
FedAvg                     & \textbf{94.97\%}   & 91.45\%        & 84.6\%   &\textbf{ 91.98\%}   & 83.85\%        & 80.7\%   & 64.97\%   & 51.42\%        & 49.40\%  \\
Median                     & 92.13\%   & 91.81\%        & 91.22\%  & 89.95\%   & 78.27\%        & 88.23\%  & 63.22\%   & 44.92\%        & 59.14\%  \\
Trimmed Mean               & 93.31\%   & 92.02\%        & 90.92\%  & 90.41\%   & 68.41\%        & 87.36\%  & 63.58\%   & 47.51\%        & 58.21\%  \\
Krum                       & 87.04\%   & 89.20\%        & 87.93\%  & 83.29\%   & 76.24\%        & 84.16\%  & 56.74\%   & 44.19\%        & 50.32\%  \\
GeoMed                     & 91.89\%   & 88.83\%        & 88.81\%  & 88.18\%   & 76.12\%        & 87.25\%  & 58.61\%   & 33.61\%        & 50.39\%  \\
AutoGM                     & 91.92\%   & 89.18\%        & 88.15\%  & 88.90\%   & 75.93\%        & 86.53\%  & 59.03\%   & 23.65\%        & 51.91\%  \\
Clustering                 & 94.93\%   & 91.90\%        & 84.22\%  & 91.12\%   & 86.49\%        & 83.48\%  & 64.87\%   & 49.23\%        & 50.18\%  \\
\textbf{ClusterGuardFL}            & 94.84\%   & \textbf{94.13\%}        & \textbf{93.60\%}  & 91.53\%   & \textbf{89.71\%}        & \textbf{90.44\%}  & \textbf{65.15\%}   & \textbf{58.93\%}        & \textbf{62.18\%}  \\ \bottomrule
\end{tabular}
\end{table*}

\begin{table*}[t!]
\caption{Comparing the efficacy of state-of-the-art aggregation schemes against our proposed framework, "ClusterGuardFL", in non i.i.d. settings  under the influence of two common Byzantine attack schemes: label flipping and Gaussian noise. In all experimental settings, 20\% of the clients were deliberately set as malicious. The experiments were conducted over 100 communication rounds.}
\label{tab:resultsnoniid}
\begin{tabular}{@{}cccccccccc@{}}
\toprule
\multirow{2}{*}{Algorithm} & \multicolumn{3}{c}{MNIST}             & \multicolumn{3}{c}{Fashion MNIST}     & \multicolumn{3}{c}{CIFAR10}           \\ \cmidrule(l){2-10} 
                           & No Attack & Label Flipping & Gaussian & No Attack & Label Flipping & Gaussian & No Attack & Label Flipping & Gaussian \\ \midrule
FedAvg                     & 90.20\%   & 72.46\%        & 69.12\%  & \textbf{87.13\%}   & 86.26\%        & 29.20\%  & \textbf{61.79\%}   & 49.85\%        & 43.69\%  \\
Median                     & 73.45\%   & 53.95\%        & 73.00\%  & 78.57\%   & 74.93\%        & 80.23\%  & 56.05\%   & 42.73\%        & 40.71\%  \\
Trimmed Mean               & \textbf{90.36\% }  & 70.88\%        & 82.98\%  & 84.32\%   & 76.75\%        & 84.49\%  & 61.47\%   & 45.94\%        & 37.63\%  \\
Krum                       & 72.16\%   & 76.01\%        & 74.84\%  & 85.51\%   & 86.12\%        & \textbf{86.54\%}  & 25.24\%   & 25.28\%        & 19.44\%  \\
GeoMed                     & 50.93\%   & 51.58\%        & 44.01\%  & 73.74\%   & 75.52\%        & 76.06\%  & 25.45\%   & 22.11\%        & 24.25\%  \\
AutoGM                     & 51.68\%   & 53.58\%        & 43.91\%  & 71.36\%   & 73.39\%        & 65.94\%  & 28.52\%   & 18.01\%        & 26.82\%  \\
Clustering                 & 89.94\%   & 83.44\%        & 81.28\%  & 86.15\%   & 85.74\%        & 67.22\%  & 59.87\%   & 46.96\%        & 39.80\%  \\
\textbf{ClusterGuardFL}              & 90.28\%        & \textbf{89.11\%}             & \textbf{88.98\%}      & 87.06\%        & \textbf{86.83\%}             & 84.41\%       & 61.10\%        & \textbf{57.87\% }            & \textbf{57.14\%}       \\ \bottomrule
\end{tabular}
\end{table*}

\subsection{Experimental Setup}
All experiments presented in this section were implemented in Python using the PyTorch framework, running on a computing environment equipped with an Intel Core i7 11th generation processor. Model training and inference tasks were accelerated using an NVIDIA RTX 3080 GPU, which provided parallel processing capabilities to expedite the computational workload.

For the MNIST and Fashion MNIST datasets, we utilized a Convolutional Neural Network (CNN) architecture as illustrated in Fig.2. This architecture consists of multiple convolutional layers, followed by pooling layers and fully connected layers, designed to effectively capture the patterns in the image data.

For the CIFAR10 dataset, a more complex CNN architecture was employed, which is shown in Fig.3. This architecture includes several convolutional layers, each followed by pooling and activation functions, to handle the higher complexity of the CIFAR10 dataset.

The choice of architectures was guided by the need for a balance between performance and computational efficiency, ensuring that the models are robust enough for the datasets while maintaining manageable training times.

\subsection{Discussion}

In this section, we evaluate the performance of our proposed framework, ClusterGuardFL, against several state-of-the-art (SoTA) aggregation schemes. The experiments were conducted in both Independent and Identically Distributed (I.I.D.) and non-I.I.D. settings, under the influence of two common Byzantine attack schemes: label flipping and Gaussian noise. In all experimental settings, 20\% of the clients were deliberately set as malicious, and the experiments were conducted over 100 communication rounds.

Table II presents the comparative analysis of various aggregation schemes, including FedAvg, Median, Trimmed Mean, Krum, GeoMed, AutoGM, Clustering, and our proposed ClusterGuardFL, in I.I.D. settings. For the MNIST dataset, ClusterGuardFL improved accuracy by 2.11\% over the best performing SoTA, Trimmed Mean, under label flipping and by 2.38\% over Median under Gaussian noise. In the Fashion MNIST dataset, ClusterGuardFL showed a 3.22\% improvement over best-performing SoTA, clustering, under label flipping, and a 2.21\% improvement over median under gaussian noise. For the CIFAR10 dataset, ClusterGuardFL achieved a notable 7,51\% improvement over FedAvg under label flipping and a 3.04\% improvement over median under gaussian noise. Also, there was a slight increase of 0.18\% in accuracy for CIFAR10 dataset under no attack compared to FedAvg.  These results demonstrate that ClusterGuardFL consistently outperforms existing state-of-the-art aggregation schemes, particularly under adversarial conditions, validating its efficacy in maintaining high accuracy and resilience in FL environments with Byzantine attacks.

However, it is important to note that in some scenarios, ClusterGuardFL does not outperform certain state-of-the-art schemes. For instance, in the MNIST dataset under no attack, FedAvg performed slightly better than ClusterGuardFL, achieving 94.97\% accuracy compared to ClusterGuardFL's 94.84\%. Additionally, for the Fashion MNIST dataset under no attack in the I.I.D. setting, FedAvg again outperforms ClusterGuardFL by 0.45\% (91.98\% vs. 91.53\%), demonstrating that the efficacy of ClusterGuardFL may vary depending on the nature of the attack and dataset characteristics. Despite these specific cases, ClusterGuardFL demonstrates superior performance overall under adversarial conditions, validating its robustness in maintaining high accuracy and resilience in DL environments with Byzantine attacks.

Now, we evaluate the performance of our proposed framework, ClusterGuardFL, against several state-of-the-art aggregation schemes in non-I.I.D. settings under the influence of the same Byzantine attack schemes. Table III presents the comparative analysis across three datasets. The MNIST dataset, ClusterGuardFL improved accuracy by 5.67\% over clustering under label flipping and by 6.01\% over trimemd mean under gaussian noise. In the Fashion MNIST dataset, ClusterGuardFL showed a 0.57\% improvement over FedAvg under label flipping. For the CIFAR10 dataset, ClusterGuardFL achieved a notable 8.02\% improvement over FedAvg under label flipping and a 13.45\% improvement over FedAvg under Gaussian noise. 

Nevertheless, there are cases where ClusterGuardFL does not outperform certain SoTA approaches. For example, in the non I.I.D. setting for the MNIST dataset under no attack, \textbf{Trimmed Mean} outperformed ClusterGuardFL by a margin of 0.08\% (90.36\% vs. 90.28\%), indicating that even under non-adversarial conditions, certain aggregation schemes may be more effective. Similarly, in the Fashion MNIST dataset under no attack, \textbf{FedAvg} demonstrated slightly better performance than ClusterGuardFL, achieving 87.13\% accuracy compared to ClusterGuardFL's 87.06\%. In the CIFAR10 dataset, \textbf{FedAvg} also outperformed ClusterGuardFL under no attack, achieving a margin of 0.69\% (61.79\% vs. 61.10\%).

Additionally, under the Gaussian noise attack in the Fashion MNIST dataset, \textbf{Krum} outperformed ClusterGuardFL by 2.13\% (86.54\% vs. 84.41\%), highlighting that in some cases, other techniques may handle specific attacks more effectively.

These results emphasize that while ClusterGuardFL consistently outperforms existing state-of-the-art aggregation schemes under adversarial conditions, there are particular scenarios where other approaches demonstrate higher efficacy. Nonetheless, ClusterGuardFL remains highly effective in maintaining robust performance and resilience in federated learning environments, especially in the presence of Byzantine attacks.

\section{Conclusion}

In this paper, we introduced "ClusterGuardFL," a novel aggregation framework designed to enhance the robustness and accuracy of FL systems against Byzantine attacks. Our method utilizes secure cluster-weighted client aggregation to mitigate the impact of malicious clients, ensuring reliable model training.

Extensive experiments on MNIST, Fashion MNIST, and CIFAR10 datasets in both i.i.d. and non-i.i.d. settings demonstrated that ClusterGuardFL consistently outperforms state-of-the-art aggregation schemes under label flipping and Gaussian noise attacks. These results underscore the robustness and efficacy of ClusterGuardFL in maintaining high accuracy even with 20\% malicious clients. Our findings suggest that ClusterGuardFL is a promising solution for enhancing FL systems' resilience. Future work will explore integrating ClusterGuardFL with other security mechanisms to further boost its robustness and scalability.

\bibliographystyle{IEEEtran}
\bibliography{Main}

\vfill

\end{document}